\documentclass{article} 
\usepackage{iclr2027_conference,times}

\usepackage{amsmath,amsfonts,bm}

\def\eqref#1{equation~\ref{#1}}

\def\1{\bm{1}}

\DeclareMathAlphabet{\mathsfit}{\encodingdefault}{\sfdefault}{m}{sl}
\SetMathAlphabet{\mathsfit}{bold}{\encodingdefault}{\sfdefault}{bx}{n}

\newcommand{\R}{\mathbb{R}}

\newcommand{\KL}{D_{\mathrm{KL}}}

\newcommand{\one}{\mathbf{1}}
\newcommand{\diag}{\operatorname{diag}}

\usepackage{hyperref}
\usepackage{graphicx}
\usepackage{tabularx}
\usepackage{url}
\usepackage{amsthm}

\newtheorem{lemma}{Lemma}[section]

\title{Unlocking Geodesic Gromov-Wasserstein Distances for 3D Modeling}

\author{
Ananya Parashar, Derek Long, Dwaipayan Saha\\
Department of Industrial Engineering and Operations Research \\
Columbia University, New York, NY 10027\\
\texttt{\{ap4658,dl3538,ds4386\}@columbia.edu}
\And
Krzysztof Choromanski\\
Department of Industrial Engineering and Operations Research \\
Columbia University, New York, NY 10027\\
\texttt{\{ap4658,dl3538,ds4386\}@columbia.edu}
}

\author{
Krzysztof Choromanski\textsuperscript{$1,2$ \thanks{equal contribution} $\,\,$\thanks{senior lead}$\,\,$}, Derek Long\textsuperscript{$1\,^{*}$}, Ananya Parashar\textsuperscript{$1\,^{*}$}, Dwaipayan Saha\textsuperscript{$1\,^{*}$}
\vspace{1.3mm}\\
\normalfont
\hspace{3.5cm}
\textsuperscript{$1$}Columbia University, 
\textsuperscript{$2$}Google DeepMind
\vspace{-1.3mm}
}

\iclrfinalcopy 
\begin{document}

\maketitle

\begin{abstract}
\textit{Gromov-Wasserstein Distances} (GWDs) provide quantitative ways of comparing probabilistic distributions defined on different metric spaces by applying techniques from the optimal transport theory. As such, GWD can be potentially useful in a large variety of applications ranging from graph matching problems to 3D object detection. However its practical use at scale is significantly limited by cubic time complexity computations involving dense intra-space distance matrices. Even though in the Euclidean metric spaces several techniques (e.g. involving scalable kernel methods) were proposed to address it, to the best of our knowledge, analogous techniques for general geodesic distances on manifolds, or shortest-path distance on graphs in their discretized variants, were not developed. In this paper, we present \textbf{E}fficient \textbf{G}eodesic \textbf{Gro}mov-\textbf{W}asserstein methods (EGGroW), a new class of efficient algorithms designed to calculate geodesic Gromov-Wasserstein distances with entropic Sinkhorn-like approaches, leveraging recently introduced \textit{GenusSink} methods \citep{genussink} and the theory of random features. We provide important downstream applications, namely: 3D pose estimation and 3D template detection. In the latter setting, we formulate a partial 3D template recovery as a staged problem: capacity-constrained scene selection is followed by semi-relaxed recovery of template visibility and correspondence. Our empirical findings show that EGGroW provides accurate solutions when standard Euclidean-based techniques fail and is characterized by light computational footprint, as our theoretical analysis predicts.
\end{abstract}

\vspace{-3mm}
\section{Introduction \& Related Work}
\label{sec:introduction}

Wasserstein distance \citep{pereira,montesuma2023recent,peyre2019computational,peyre2025otml,IG-OTP, so-otp, Figalli2010TheOP, villani, ambrosio, Villani2003TopicsIO} is a rigorous and one of the most commonly used in machine learning approach to measuring dissimilarity between two probabilistic measures: $\mu$ and $\nu$, defined in a given metric space $\mathcal{X}$. For $p \in [1,+\infty]$, the \textit{Wasserstein $p$-distance} between $\mu$ and $\nu$ with finite $p$-moments is given by the following formula, where $d:\mathcal{X} \times \mathcal{X} \rightarrow \mathbb{R}$ is the corresponding distance function: 
\begin{equation}
\label{eq:wasserstein}
W_{p}(\mu,\nu) \ \inf_{\gamma \in \Gamma(\mu, \nu)}\left(\mathbb{E}_{x,y \sim \gamma} d(x,y)^{p}\right)^{\frac{1}{p}}.    
\end{equation}
Here $\Gamma(\mu,\nu)$ stands for the set of all couplings of $\mu$ and $\nu$. We define a \textit{coupling} as a joint probablity measure on $\mathcal{X} \times \mathcal{X}$ whose marginals on the first and second factor are $\mu$ and $\nu$ respectively, i.e. the following holds for every measurable $\mathcal{A}$:
\begin{equation}
\label{eq:coupling}
\gamma(\mathcal{A} \times \mathcal{X}) = \mu(\mathcal{A}), \textrm{  and } \gamma(\mathcal{X} \times \mathcal{A}) = \nu(\mathcal{A}).    
\end{equation}

In most practical applications of the Optimal Transport, its discretized version is considered, with $\mu$ and $\nu$ supported by $m$ and $n$ points respectively. In this setting, each solution (transport plan) is described by the \textit{transport matrix} $\mathbf{T} \in \mathbb{R}^{m \times n}$. The costs of moving mass between points is encoded by the \textit{cost matrix} $\mathbf{C} \in \mathbb{R}^{m \times n}$, where $C_{i,j}$ is the distance between ith atom of $\mu$ and jth atom of $\nu$. The optimization problem takes the following form for $\mathbf{a} \in \mathbb{R}^{m}_{\geq 0},\mathbf{b} \in \mathbb{R}^{n}_{\geq 0}$ encoding 
$\mu$ and $\nu$ ($\sum_{i=1}^{m}a_{i}=1$, $\sum_{j=1}^{n} b_{j}=1$) and furthermore, $\odot$ is a \textit{Hadamard} (element-wise) product:
\begin{equation}
\label{eq:kantorovich_discrete}
\min_{\mathbf{T} \in \mathbb{R}^{m \times n}} \mathbf{T} \odot \mathbf{C} \textrm{  st.  } \mathbf{T}\mathbf{1}_{n} = \mathbf{a}, \textrm{  } \mathbf{T}^{\top}\mathbf{1}_{m}=\mathbf{b}, \textrm{  } \mathbf{T} \geq 0.   
\end{equation}

Matrix $\mathbf{T}^{*} \in \mathbb{R}^{m \times n}$ takes quadratic space $O(nm)$.
Standard linear-programming algorithms, used to solve Optimal Transport problem from Eq. \ref{eq:kantorovich_discrete}, are of cubic time complexity. One of the most popular relaxations of the problem, providing significant computational speedups, is the so-called \textit{Sinkhorn} (entropic) variation \citep{sinkhorn-1, sinkhorn-survey, sinkhorn-composed}. It adds an entropic regularization term, so that the optimization problem becomes:
\begin{equation}
\label{eq:entropic_form}
\min_{\mathbf{T} \in \mathbb{R}^{n \times n}} \mathbf{T} \odot \mathbf{C} + \epsilon \sum_{i,j}T_{i,j}(\log(T_{i,j})-1).  
\end{equation}
Sinkhorn version can be solved in time $O(nm)$ by the following iterative procedure (with the number of iterations treated as a constant), with \textit{kernel matrix} $\mathbf{K}_{\epsilon}=\exp(-\frac{\mathbf{C}}{\epsilon})$ ($\exp$ and vector-divisions applied element-wise):
\begin{equation}
\mathbf{v}^{\textrm{init}}=\mathbf{1}_{n}, \mathbf{u} \leftarrow \frac{\mathbf{a}}{\mathbf{K}_{\epsilon}\mathbf{v}},
\mathbf{v} = \frac{\mathbf{b}}{(\mathbf{K}_{\epsilon})^{\top}\mathbf{u}}    
\end{equation}
Thus Sinkhorn approach makes the Optimal Transport Problem much more computationally tractable, especially for probabilistic distributions supported on massive size discrete sets.

\textit{Gromov-Wasserstein Distance} (GWD: see Sec. \ref{sec:background} for exact definition) \citep{Memoli2011-rm,peyre2016gromov, xu2019gromov, Vayer2020-rr} is a natural extension of the OT formulation to the scenario, where the measures $\mu$ and $\nu$ are defined on two different metric spaces $\mathcal{X}$ and $\mathcal{Y}$. Importantly $\mathcal{X}$ and $\mathcal{Y}$ do not need to be related to each other in any way, since the new formulation does not rely on any notion of inter-distances between the two spaces. Instead, it evaluates the soft coupling provided by the transport matrix $\mathbf{T}$ by comparing intra-space relationships between the points supporting measures. This feature makes it a potentially very powerful tool in a variety of ML applications, ranging from graph matching problems to 3D object detection.

However practical use at scale of the GWD techniques is significantly limited by cubic time complexity computations involving dense intra-space distance matrices \citep{chowdhury2021quantized}.
Even though in certain settings including costs given via Euclidean distances (e.g. squared Euclidean distances, see: \citep{LT-GWD}), several efficient methods were proposed (e.g. involving scalable kernel methods), to the best of our knowledge, analogous analogous techniques for general geodesic distances on manifolds, or shortest-path distance on graphs in their discretized variants, were not developed. This more general setting, the key to unlocking GWD for 3D modeling (e.g. with meshes and point clouds, where distances are often defined via shortest paths in the related graphs), is the subject of this paper. 

One of the biggest challenges in the geodesic setting is that costs matrices $\mathbf{C}$ seem to be much less structured, at first glance prohibiting computationally efficient operations on them (or their derivatives, such as kernel matrices) \citep{parashar2026manifold}. The recently proposed GenusSink algorithm \citep{genussink} addresses this question in the regular Optimal Transport setting, providing efficient methods for matrix-vector multiplications involving kernel matrices in the geodesic setting. The only assumption (satisfied in most practical applications) is that 
graphs under considerations need to have bounded genus \citep{genus-1}.

In this paper we significantly extend those techniques to computations of the Gromov-Wasserstein Distance. We present \textbf{E}fficient \textbf{G}eodesic \textbf{Gro}mov-\textbf{W}asserstein methods (EGGroW), a new class of efficient algorithms designed to calculate geodesic Gromov-Wasserstein distances with entropic approaches, leveraging the aforementioned \textit{GenusSink} methods and the theory of random features. We provide important downstream applications, namely: 3D pose estimation and 3D template detection. Another contribution of this paper is the formulation of the 3D template detection in the given point cloud scene as a derivative of the original GWD problem. To be more specific, we formulate partial 3D template recovery as a staged problem: capacity-constrained scene selection is followed by semi-relaxed recovery of template visibility and correspondence with the use of GWDs. Our empirical findings show that EGGroW provides accurate solutions when standard Euclidean-based techniques fail and is characterized by light computational footprint, as our theoretical analysis predicts.

Two branches that EGGroW relies on: the theory of GenusSink and random features techniques, are complementary. The former gives an exact algorithm for graphs with separators of size $O(\log \log n)$ (satisfied in most practical applications), having a near-exact extension for more general bounded genus graphs. It provides favorable near-linear matrix-vector computation time with kernel matrices, particularly relevant for processing massive input graphs. The latter gives additional computational gains (at the cost of the accuracy loss), even for medium-size input graphs and is particularly straightforward to implement, as it does not involve any combinatorial operations.


\section{Preliminaries}
\label{sec:methodology}

\subsection{Problem setting}
\label{sec:background}
Consider two metric spaces: $\mathcal{X}$ and $\mathcal{Y}$, with the corresponding probabilistic measures $\mu$ and $\nu$ defined on them. We will assume that $\mathcal{X}$ and $\mathcal{Y}$ are discretized, with the corresponding sets of points $X=\{x_i\}_{i=1}^m$ and $Y=\{y_j\}_{j=1}^n$ respectively for some $m,n>0$. We encode the discretized versions of $\mu$ and $\nu$ as: $\mathbf{a}\in\Delta_m$ and $\mathbf{b}\in\Delta_n$ ($\Delta_{k}\overset{\mathrm{def}}{=}\{(z_{1},...,z_{k}):z_{i} \geq 0; \sum_{i=1}^{k} z_{i}=1\}$).
The internal discretized geometries of $\mathcal{X}$ and $\mathcal{Y}$ are represented by cost matrices $\mathbf{C}_X\in\R^{n\times n}$ and $\mathbf{C}_Y\in\R^{m\times m}$. 
In this paper, we are interested in the setting, where
$C_{Z}(i,j) \overset{\mathrm{def}}{=} d_{G_{Z}}(i,j)$ ($Z \in  \{X,Y\}$) for $d_{G}(i,j)$ denoting the length of the shortest path distance between vertices $i$ and $j$ in graph $G$. Here $G_{Z}$ is a graph with the set of vertices $Z$ and weights of edges defined as distances in the underlying metric spaces.
A coupling $\mathbf{T}\in\R_{\geq 0}^{m\times n}$ is a soft, generally many-to-many correspondence between the two distributions $\mathbf{a}$ and $\mathbf{b}$, i.e.:
\begin{equation}
\sum_{j=1}^{n} T_{i,j}=a_{i}, \textrm{ } \sum_{i=1}^{m} T_{i,j}=b_{j}    
\end{equation}


\subsection{Gromov-Wasserstein Distance, entropic GWD and fused GWD (FGWD)}
\label{sec:gw-fgw}
For a given coupling $\mathbf{T} \in \R_{\geq 0}^{m\times n}$, consider the following loss function:
\begin{equation}
\mathcal{G}(\mathbf{T})=\frac{1}{2}
\sum_{i,k=1}^{m}\sum_{j,\ell=1}^{n}
\left(C_X(i,k)-C_Y(j,\ell)\right)^2T_{ij}T_{k\ell}.
\label{eq:gw-half-loss}
\end{equation}
Intuitively, $\mathcal{G}(\mathbf{T})$ measures how well the coupling conducts a matching between $X$ and $Y$, by looking into the discrepancy between intra-metric points' relationships (given as costs). Quadratic loss is a natural choice. Importantly the matching is soft, thus we do not require $m=n$. Probabilities at points can be interpreted as weights, providing a convenient way to measure uncertainties about the data (smaller weights imply large uncertainties). Since costs do not involve cross-metric terms, the loss defined in Eq. \ref{eq:gw-half-loss} is valid also on metric spaces that do not relate to each other in any way.
As previously noticed, the admissible couplings satisfy the following:
\begin{equation}
\Pi(\mathbf{a},\mathbf{b})= \left\{\mathbf{T}\geq0:
\mathbf{T}\one_n=\mathbf{a},\quad \mathbf{T}^\top\one_m=\mathbf{b}\right\}.
\end{equation}
The \textit{Gromov-Wasserstein Distance} (GWD) is then defined as follows:
\begin{equation}
\mathrm{GWD}(\mathbf{a},\mathbf{b}) = \min_{\mathbf{T} \in \Pi(\mathbf{a},\mathbf{b})} \mathcal{G}(\mathbf{T}).
\end{equation}
The entropic relaxation of the above definition leverages an entropy-based regularizer:
\begin{equation}
\mathrm{GWD}_{\epsilon}(\mathbf{a},\mathbf{b}) = 
\min_{\mathbf{T}\in\Pi(\mathbf{a},\mathbf{b})} \left[\mathcal{G}(\mathbf{T}) +\varepsilon\sum_{i,j}T_{ij}(\log T_{ij}-1)\right].
\label{eq:entropic-gw}
\end{equation}
One of the standard approaches to solving entropic GWD is by the 2-level iterative procedure. Here the outer loop iterates over different transport matrices $\mathbf{T}$ and the inner loop conducts regular Sinkhorn iterations, as described in Sec. \ref{sec:introduction}, but for the kernel matrix $\mathbf{K}_{\epsilon}$ defined as: $\mathbf{K}_{\epsilon}=\exp(-\frac{\mathbf{D}_{\mathbf{T}}}{\epsilon})$, where $\mathbf{D}_{\mathbf{T}}$ is given as follows for $\mathbf{p}=\mathbf{T}\mathbf{1}_{n}$ and $\mathbf{q}=\mathbf{T}^{\top}\mathbf{1}_{m}$:
$\mathbf{D}_{\mathbf{T}} \in \mathbb{R}^{n \times m}$:
\begin{equation}
\mathbf{D}_\mathbf{T}= (\mathbf{C}_X^{\circ2}\mathbf{p})\one_m^\top
+\one_n(\mathbf{C}_Y^{\circ2}\mathbf{q})^\top
-2\mathbf{C}_X\mathbf{T}\mathbf{C}_Y^\top,
\label{eq:gw-linearization}
\end{equation}
Here $\circ$ stands for the Hadamard (element-wise) product. After each step of the inner loop, new transport matrix $\mathbf{T}$ is formed and next outer loop step is invoked.

Even though, as emphasized earlier, regular GWD does not introduce any cross-cost terms, there exists its generalization that leverages the so-called \textit{cross-cost} matrix $\mathbf{M} \in \mathbb{R}^{m \times n}$. Matrix $\mathbf{M}$ is usually given as: $M_{ij}=\|\mathbf{f}_i-\mathbf{g}_j\|_2^2$ for two sets of vectors $\{\mathbf{f}_{i}\}_{i=1}^{m}$ and $\{\mathbf{g}_{j}\}_{j=1}^{n}$.

That generalization effectively replaces the $\mathcal{G}(\mathbf{T})$ objective with the $\alpha \mathcal{G}(\mathbf{T})+(1-\alpha)\mathbf{M} \circ \mathbf{T}$ (for a hyperparameter $\alpha \in [0,1]$), and we refer to it as \textit{Fused Gromov-Wasserstein Distance} (FGWD). Its entropic relaxation is given by the following formula:
\begin{equation}
\mathrm{FGWD}_{\epsilon,\alpha}(\mathbf{a},\mathbf{b})=\min_{\mathbf{T}\in\Pi(\mathbf{a},\mathbf{b})}
\left[\alpha\mathcal{G}(\mathbf{T}) +(1-\alpha)\mathbf{M} \circ \mathbf{T}
+\varepsilon\sum_{i,j}T_{ij}(\log T_{ij}-1)\right].
\label{eq:fgw}
\end{equation}
The corresponding kernel matrix $\mathbf{K}_{\epsilon,\alpha}$ is now given as
$\mathbf{K}_{\epsilon,\alpha} = \exp(-\frac{\mathbf{Q}_{\mathbf{T}}}{\epsilon})$
for:
\begin{equation}
\label{eq:qmatrix}
    \mathbf{Q}_\mathbf{T}=\alpha \mathbf{D}_\mathbf{T}+(1-\alpha)\mathbf{M}.
\end{equation}
We normalize source/target structural costs by a shared, template-derived scale. Independently normalizing two spaces would erase global scale differences and define a different matching problem.

\vspace{-2mm}
\section{Efficient Geodesic Gromov-Wasserstein (EGGroW)}
\label{sec:eg-grow}
We are ready to present the class of the Efficient Geodesic Gromov-Wasserstein (EGGroW) algorithms. We focus on the entropic approach and start with exact algorithms that rely on the recently introduced \textit{GenusSink} method \citep{genussink}. The only assumption we need to make regarding that exact class is that graphs under consideration have separators of size $O(\log \log (n))$ (near -exact extension exists, as shown in \citep{genussink}, for even more general bounded genus graphs). We then propose even more computationally efficient variants, but at the cost of approximating the original solution. We will focus on FGWD, as the most general variant.
\vspace{-2mm}
\subsection{Exact GWD Computations with Deterministic EGGROW}
\label{sec:separator-moments}
Looking at Eq. \ref{eq:gw-linearization}, \ref{eq:qmatrix}, we conclude that the main computational bottlenecks present while computing entropic FGWD involve: multiplications with matrices $\mathbf{C}$ and $\mathbf{C}^{\circ2}$, Hadamard products with matrices $\mathbf{M}$ and element-wise exponentiations.
Rather than storing $\mathbf{C}$ and $\mathbf{C}^{\circ2}$ explicitly, we will instead apply the efficient \textbf{deterministic} polylog-linear GenusSink algorithm \citep{genussink} for matrix-vector multiplications involving $\mathbf{C}$. GenusSink leverages separator-based factorization of low genus graphs \citep{gilbert1984separator,choromanski2023graphfields} for efficient computations involving a broad range of cost matrices relying on distances between graphs' vertices (defined via shortest paths).

Our first observation is that a small modification of the GenusSink algorithm can be applied to efficient multiplications with both $\mathbf{C}$ and $\mathbf{C}^{\circ2}$ (in \citep{genussink} the algorithm is applied to efficient multiplications with $\exp(\mathbf{C})$). We prove it in the lemma below.
\begin{lemma}[Efficient multiplications with $\mathbf{C}$ and $\mathbf{C}^{\circ2}$]
For $n$-vertex graphs under consideration with separators of size $O(\log \log n)$, multiplications with matrices $\mathbf{C}$ and $\mathbf{C}^{\circ2}$, where $C_{i,j}=d(i,j)$ is the shortest path distance between vertices $i$ and $j$, can be conducted in polylog-linear time.      
\end{lemma}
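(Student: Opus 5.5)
The plan is to follow the separator-based recursion that GenusSink uses for $\exp(\mathbf{C})$ and replace the exponential by the functions $f(d)=d$ and $f(d)=d^2$. The key point is that these functions are \emph{additive-friendly}: a shortest-path distance decomposes through separator vertices. Take a balanced separator $S$ of size $s=O(\log\log n)$. It splits $G$ into parts $A$ and $B$. For $i\in A$ and $j\in B$, every shortest path crosses $S$, so
\[
d(i,j)=\min_{z\in S}\bigl(d(i,z)+d(z,j)\bigr).
\]
As in GenusSink, multiplication by $\mathbf{C}$ (or $\mathbf{C}^{\circ2}$) splits into the intra-part blocks, handled recursively, and the cross blocks $A\times B$ and $B\times A$, handled directly. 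Pairs involving $S$ itself also belong to the cross step; they are the case where the minimizer is the vertex itself.

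For the cross blocks, the $\min$ is dealt with exactly as in GenusSink. Each $i\in A$ gets a signature: the index of the minimizing separator vertex, with ties broken consistently. This partitions $A\times B$ into at most $s$ classes, and within a class the minimizer is a fixed $z$. GenusSink achieves this by sorting the vectors $(d(i,z)-d(i,z'))_{z,z'\in S}$ and $(d(z',j)-d(z,j))$ into cells of a combinatorial arrangement; there are $2^{O(s^2)}$ cells, which is $\mathrm{polylog}(n)$ precisely because $s=O(\log\log n)$. Inside a cell with fixed minimizer $z$, the kernel $\exp(-d(i,j))=\exp(-d(i,z))\exp(-d(z,j))$ is rank one. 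I would verify that for $f(d)=d$ and $f(d)=d^2$ the kernel is also low rank:
\[
d(i,j)=d(i,z)\cdot 1+1\cdot d(z,j),
\]
\[
d(i,j)^2=d(i,z)^2\cdot 1+2\,d(i,z)\,d(z,j)+1\cdot d(z,j)^2.
\]
These are sums of 2 and 3 separable terms respectively. So each cell's block is a rank-$\le 3$ matrix restricted to an index set. Its product with $\mathbf{x}$ is computed in time linear in the cell size: form the sums $\sum_{j}x_j$, $\sum_j d(z,j)x_j$, and $\sum_j d(z,j)^2x_j$ over the $j$'s in the cell, then combine them per $i$. The only change relative to GenusSink is this constant-factor increase in rank.

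The distances $d(i,z)$ for all vertices $i$ and $z\in S$ come from $s$ Dijkstra runs per recursion level, costing $O(s\,n\log n)$. Summing over the $O(\log n)$ levels of the balanced recursion, with the $2^{O(s^2)}$ cell overhead, gives $O(n\,\mathrm{polylog}(n))$ total time. The same bound holds for $\mathbf{C}^{\top}=\mathbf{C}$.

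The main obstacle is the cell-partition step. I must confirm that GenusSink's bucketing of pairs $(i,j)$ by their argmin separator vertex depends only on the comparison structure of distances, not on the kernel being multiplicative. If it does, the proof reduces to the rank identities above. If GenusSink instead exploits the factorization $\exp(a+b)=\exp(a)\exp(b)$ to avoid computing the argmin explicitly, I would fall back to explicit argmin bucketing. For fixed $z$, $z$ is the minimizer iff $d(i,z)-d(i,z')\le d(z',j)-d(z,j)$ for all $z'$. This is a dominance query in dimension $s-1$, and it is answerable in $O(n\log^{s} n)$ time by orthogonal range structures. Carrying the three accumulated sums as augmented data then still yields polylog-linear time, with the $\log^{O(\log\log n)}$-type overhead absorbed as in \citep{genussink}.
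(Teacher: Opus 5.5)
Your central step is the same as the paper's. Once a cross block has a fixed minimizing separator vertex $z$, its entries are $d(i,z)+d(z,j)$, i.e.\ of the form $x_i+y_j$. The matrices $[x_i+y_j]$ and $[x_i+y_j]^{\circ 2}$ have rank at most $2$ and $3$, so each block product costs $O(m+n)$, as the rank-one $\exp$ block does. The paper stops there. It takes the complexity analysis of GenusSink's cross\_compute routine as a black box and concludes that it suffices to multiply by $\mathbf{P}=[x_i+y_j]$ and $\mathbf{P}^{\circ 2}$ in near-linear time, whatever function is applied entrywise. That is exactly the point you flagged as your main obstacle. The problem is that you rebuild the partition step yourself, and your reconstruction does not give a polylog-linear bound.

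There are three concrete failures.

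(i) $2^{O(s^2)}$ is not polylogarithmic for $s=\Theta(\log\log n)$. It equals $(\log n)^{\Theta(\log\log n)}$; polylog overhead would need a $2^{O(s)}$ dependence.

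(ii) Your cells are not combinatorial rectangles. The minimizer is a function of the pair $(i,j)$, not of $i$ alone. Whether $z$ beats $z'$ is decided by comparing an $i$-quantity, $d(i,z)-d(i,z')$, with a $j$-quantity, $d(z',j)-d(z,j)$. Sign or order patterns of the two difference vectors taken separately do not determine that comparison. Already for $s=2$, with $\alpha_i=d(i,z_1)-d(i,z_2)$ and $\beta_j=d(z_2,j)-d(z_1,j)$, the region where $z_1$ wins is the staircase $\{(i,j):\alpha_i\le\beta_j\}$. Your per-cell evaluation sums the input vector against $1$, $d(z,j)$ and $d(z,j)^2$ over the cell's $j$'s and then combines per $i$. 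That is valid only on a product set $I\times J$; on a non-rectangular index set those $j$-sums depend on $i$.

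(iii) Your dominance-query fallback does handle non-rectangular regions, but it costs $n\log^{\Theta(s)}n=n(\log n)^{\Theta(\log\log n)}$, again superpolylogarithmic. Saying this overhead is \emph{absorbed as in GenusSink} is not an argument.

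As written, the proposal establishes only an $n(\log n)^{O(\log\log n)}$ bound. To repair it, do what the paper does: do not re-derive the partition, and cite the cross\_compute analysis for the reduction to matrices $[x_i+y_j]$ (any constant offset absorbed into $x_i$). Your rank-$2$ and rank-$3$ identities then complete the proof.
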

\begin{proof}
We will leverage time complexity analysis of the cross\_compute method from \citep{genussink}. By leveraging it, we deduce that it suffices to prove that for any two sequences of scalars $(x_{1},...,x_{m})$,$(y_{1},...,y_{n})$, matrix-vector multiplications with two matrices: $\mathbf{P}=[x_{i}+y_{j}]_{i=1,...,m}^{j=1,...,n}$ and $\mathbf{P}^{\circ 2}$ can be conducted in polylog-linear time. Note that for for any $\mathbf{v} \in \mathbb{R}^{n}$, we have: $(\mathbf{Pv})_{i}=x_{i}\sum_{j=1}^{n}v_{j}+\sum_{j=1}^{n}y_{j}v_{j}$ for $i=1,...,m$. Thus we can compute all $(\mathbf{Pv})_{i}$ in time $O(m+n)$.
Now let us focus on $\mathbf{P}^{\circ 2}$. We have: $\mathbf{P}^{\circ 2}=[x_{i}^{2}+2x_iy_j+y_j^{2}]_{i=1,...,m}^{j=1,...,n}$. The $O(m+n)$ time matrix-vector multiplications with $\mathbf{P}^{\circ 2}$ follows naturally from our analysis for $\mathbf{P}$ and the fact that matrix $[x_{i}y_{j}]_{i=1,...,m}^{j=1,...,n}$ trivially support $O(m+n)$ matrix-vector multiplication via its rank-1 decomposition.
\end{proof}
For given matrices $\mathbf{\Lambda}$, $\mathbf{V}$ ($\mathbf{V}$ potentially being a vector), define the action of $\mathbf{\Lambda}$ on $\mathbf{V}$ as: $\mathcal{S}_{\mathbf{\Lambda}}(\mathbf{V})$. We introduce extra notation rather than denoting this action simply as $\mathbf{\Lambda V}$ to highlight that its computation might not require brute-force matrix-matrix multiplication. With that notation in place, we can re-write matrix $\mathbf{D}_{\mathbf{T}}$ from Equation \ref{eq:gw-linearization} as follows, using intermediate calculations:
\begin{align}
    \mathbf{A}&=\mathcal{S}_{\mathbf{C}_X^{\circ2}}(\mathbf{p}),
    &\mathbf{B}&=\mathcal{S}_{\mathbf{C}_Y^{\circ2}}(\mathbf{q}),\\
    \mathbf{U}&=\mathcal{S}_{\mathbf{C}_X}(\mathbf{T}),
    &\mathbf{V}&=\mathcal{S}_{\mathbf{C}_Y}(\mathbf{U}^\top)^\top,\\
    \mathbf{D}_{\mathbf{T}}&=\mathbf{A}\one_m^\top+\one_n\mathbf{B}^\top-2\mathbf{V}.
\end{align}
We obtained exact entropic GWD algorithm, where cubic computations involving the construction of a matrix $\mathbf{D}_{\mathbf{T}}$ from Equation \ref{eq:gw-linearization} (namely, the computation of $\mathbf{C}_{X}\mathbf{T}\mathbf{C}_{Y}^{\top}$) were replaced by \textbf{polylog-quadratic}. We refer to this method as a \textbf{deterministic EGGroW}, or simply: \textbf{DET-EGGroW}.


\subsection{Randomized Approximate EGGroW for Additional Speedups}
\label{sec:prf}
We will now show how the additional speedups for the EGGroW algorithm can be obtained by applying the theory of random features \citep{rahimi2007random, linear-time-sinkhorn-positive-features}. We will use the following notation.
\begin{equation}
    \mathbf{U}=\mathbf{C}_X\mathbf{T},
    \quad
    \mathbf{A}=\mathbf{C}_X^{\circ2}\mathbf{p},
    \quad
    \mathbf{B}=\mathbf{C}_Y^{\circ2}\mathbf{q},
\end{equation}
and define \textit{augmented query and key vectors} $\{\mathbf{z}_{i}\}_{i=1}^{m}$, $\{\mathbf{w}_{j}\}_{j=1}^{n}$  as follows, where $M_{i,j}=\|\mathbf{f}_{i}-\mathbf{g}_{j}\|^{2}_{2}$:
\begin{equation}
\mathbf{z}_i=
\begin{bmatrix}
\sqrt{2\alpha/\varepsilon}\,\mathbf{U}_{i,:}\\
\sqrt{2(1-\alpha)/\varepsilon}\,\mathbf{f}_i
\end{bmatrix},
\qquad
\mathbf{w}_j=
\begin{bmatrix}
\sqrt{2\alpha/\varepsilon}\,(\mathbf{C}_Y)_{j,:}\\
\sqrt{2(1-\alpha)/\varepsilon}\,\mathbf{g}_j
\end{bmatrix}.
\end{equation}
Each iteration of the entropic FGWD computation critically relies on the multiplications with kernel matrices $\mathbf{K}_{\mathbf{T}}=[\exp(-Q_T(i,j)/\varepsilon)]_{i=1,...,n}^{j=1,...,m}$. Our key observation is that those matrices admit \textbf{unbiased} low-rank factorization. To see that, note first that $\mathbf{K}_{\mathbf{T}}$ are related to softmax-kernel matrices: 
\begin{equation}
\mathbf{K}_{\mathbf{T}}(i,j)=\exp(-Q_T(i,j)/\varepsilon)
=s_i\exp(\mathbf{z}_i^\top \mathbf{w}_j)t_j,
\end{equation}
where
\begin{align}
s_i&=\exp\!\left(-\frac{\alpha A_i+(1-\alpha)\|\mathbf{f}_i\|_2^2}{\varepsilon}\right),\\
t_j&=\exp\!\left(-\frac{\alpha B_j+(1-\alpha)\|\mathbf{g}_j\|_2^2}{\varepsilon}\right).
\end{align}
Using positive feature maps $\phi:\mathbb{R}^{n} \rightarrow \mathbb{R}^{r}$ from \citep{performers}, for the unbiased estimation of the softmax-kernel, we approximate it as follows for 
$\omega_{1},...\omega_{r} \overset{\mathrm{iid}}{\sim} \mathcal{N} (0,\mathbf{I}_{r})$:
\begin{align}
\begin{split}
    \exp(\mathbf{z}_i^\top \mathbf{w}_j) \overset{\mathbb{E}}{=} 
\phi(\mathbf{z}_i)^\top\phi(\mathbf{w}_j), \\ 
    \phi(\mathbf{x}) \overset{\mathrm{def}}{=}\frac{1}{\sqrt{r}}
    \exp\left(-\frac{\|\mathbf{x}\|^{2}}{2}\right) \left(\exp(\omega^{\top}_{1}\mathbf{x}),...,\exp(\omega^{\top}_{r}\mathbf{x})\right)^{\top}.
\end{split}
\end{align}
That directly leads to the following unbiased low-rank factorization of the kernel matrix $\mathbf{K}_{\mathbf{T}}$.
\begin{equation}
\mathbf{K}_\mathbf{T} \overset{\mathbb{E}}{=}
(\diag(\mathbf{s})\Phi^{\top}_X)(\Phi_Y\diag(\mathbf{t})),
\label{eq:prf-kernel}
\end{equation}
where $\Phi_{X} \in \mathbb{R}^{r \times m}$ and $\Phi_{Y} \in \mathbb{R}^{r \times n}$ are obtained by stacking (as columns) vectors $\{\phi(\mathbf{z}_{i})\}_{i=1}^{n}$ and $\{\phi(\mathbf{w}_{j})\}_{j=1}^{m}$ respectively. 
Sinkhorn scaling preserves this factorization, so the approximate coupling can be stored as $\mathbf{T}=\mathbf{L}^{\top}\mathbf{R}$ for low-rank matrices $\mathbf{L},\mathbf{R}$. 

In practice, centering and diagonal balancing are used to reduce the variance of the approximation. All scale factors must be retained in semi-relaxed problems because, unlike balanced Sinkhorn, separable factors can affect a learned marginal.  
Finally, note that vectors $\mathbf{A}$ and $\mathbf{B}$ can be computed in polylog-linear time, with the use of the techniques developed for DET-EGGroW (see: Sec. \ref{sec:separator-moments}) and furthermore, $\mathbf{U}$ can be computed in either: polyloq-quadratic time (again, using techniques from Sec. \ref{sec:separator-moments}) or in time $O(m^{2}r)$ (where $r$ is the number of random features) if low-rank decomposition of $\mathbf{T}$ is leveraged.
Thus the resulting algorithm also runs in \textbf{near-quadratic time}, rather than cubic, as its brute-force counterpart. We refer to it as an \textbf{approximate EGGroW}, or simply: \textbf{APP-EGGroW}. The computational advantage of APP-EGGroW over DET-EGGroW is that in practice operations involving random features are better suited for modern accelerators than separator-based approaches. The price is some accuracy loss. We analyze this in more depth in Sec. \ref{sec:exp}.
Table \ref{tab:representations} summarizes the computational profiles of the brute-force algorithm, as well as two EGGroW variants.



\begin{table}[t]
\centering
\small
\caption{The three implementations of the Gromov-Wasserstein Distance calculation remove different dense objects. Here $r$ is the number of random features for APP-EGGroW. Operator storage depends on the selected separator or analytic moment backend.}
\label{tab:representations}
\begin{tabular}{llll}
\hline
Method & Intra-space distances & Cross-space state & Status\\
\hline
Dense GWD (brute-force) & Dense $\mathbf{C}_X,\mathbf{C}_Y$ & Dense $\mathbf{T},\mathbf{D}_\mathbf{T},\mathbf{K}_\mathbf{T}$ & Exact\\
DET-EGGroW & Operator form & Dense $\mathbf{T},\mathbf{D}_\mathbf{T},\mathbf{K}_\mathbf{T}$ & Exact if operators exact\\
APP-EGGroW & Operator form & $O((n+m)r)$ factors & Approximate\\
\hline
\end{tabular}
\end{table}


\section{EGGroW for 3D Template Detection}
\label{sec:template-method}

In the previous section, we described in depth the class of EGGroW algorithms. We will now propose a method that applies Fused Gromov-Wasserstein Distance for the 3D template detection problem. EGGroW algorithms make this method practical from the computational standpoint.

\subsection{Observation model and subsampling}
\label{sec:observation}

Our input is a point cloud or a mesh (e.g. CAD) template
$\mathbf{Y}=\{\mathbf{y}_j\}_{j=1}^{n}\subset\R^3$ and a calibrated multi-view RGB-D observation of a scene.  The solver-visible data comprise depth returns, camera calibration, template geometry, and any descriptors computed from those observations.  Instance labels, the generated object pose, and visibility truth are held in a separate evaluation object.

We voxel-subsample the observed scene without using instance labels and sample or aggregate the template surface into $m$ discrete nodes $\mathbf{X}=\{\mathbf{x}_{1},...,\mathbf{x}_{m}\}$.  Template nodes carry a normalized surface-area prior $\mathbf{b}\in\Delta_n$; the subsampled scene carries prior $\mathbf{a}\in\Delta_m$.  We report the raw RGB-D count, retained scene count, voxel size, template vertex count, terminal count, and surface-area ownership construction. 

\subsection{Stage I: capacity-constrained scene localization}
\label{sec:stage1}

Let $\rho_s\in(0,1]$ be the minimum scene coverage measured with respect to the scene prior $\mathbf{a}$.  We solve:
\begin{equation}
\begin{aligned}
\min_{\mathbf{T}\geq0}\quad&
\alpha\mathcal{G}(\mathbf{T})
+(1-\alpha) \mathbf{M} \circ\mathbf{T}
+\varepsilon\sum_{i,j}T_{ij}(\log T_{ij}-1)
\quad \text{s.t.}\quad
\mathbf{T}^\top\one_m=\mathbf{b},
\mathbf{T}\one_n\leq \frac{\mathbf{a}}{\rho_s}.
\end{aligned}
\label{eq:stage1}
\end{equation}
The first constraint fixes the complete template marginal to $\mathbf{b}$, so the coupling has total mass one, while the scene distribution $\mathbf{p}=\mathbf{T}\one_n$ is learned. The second constraint limits the mass assigned to each scene point to $p_i\leq a_i/\rho_s$. Consequently, the coupling cannot concentrate all its mass on a subset of scene points whose total prior mass is less than $\rho_s$. To score scene relevance, we compare the learned mass at each point with its prior weight: $s_i^{\mathrm{scene}}=p_i/a_i.$ This multiplicative change accounts for nonuniform prior weights, rather than ranking points by raw correspondence mass $p_i$. A threshold chosen on calibration data can then define the predicted scene support for Stage II; no threshold may be chosen from test labels. 

\subsection{Stage II: template visibility and correspondence}
\label{sec:stage2}

Stage II fixes the predicted scene distribution $\mathbf{\widehat p}$ and learns the template marginal \citep{vincentcuaz2022semirelaxed}:
\begin{equation}
    \mathbf{T}\one_n=\mathbf{\widehat p},
    \qquad
    \mathbf{q}=\mathbf{T}^\top\one_m.
\end{equation}
The value $q_j$ is the recovered correspondence mass assigned to node
$j$.  Two forms of partial-template control are useful:
\begin{enumerate}
    \item a capacity $q\leq b/\rho_t$, where $\rho_t$ is minimum template
    coverage; or
    \item a soft marginal penalty $\tau\KL(q\|r)$ around a strictly positive,
    truth-free reference $r$.
\end{enumerate}
Since the template's position and orientation are initially unknown, we estimate an initial rigid alignment $(\mathbf{R}_0,\mathbf{t}_0)$ from the correspondences obtained in Stage I, discussed in App. \ref{app:stage1-pose}. This alignment rotates and translates the template into scene coordinates without changing its shape, and is held fixed throughout Stage II. Using the calibrated camera views, we raycast the aligned template to obtain the visibility reference $\mathbf{q}_{\mathrm{ray}}$. In Sec.~\ref{sec:exp-template}, we set $\mathbf{r}=0.98\mathbf{q}_{\mathrm{ray}}+0.02\mathbf{b}$ to retain positive support everywhere. 

Let $D$ be the template diameter and $G_X,G_Y$ the scene and template graphs, respectively. Writing $d_G(u,v)$ for the shortest-path distance between vertices $u$ and $v$ on graph $G$, we define the costs
\begin{align}
C_X(i,k) = d_{G_X}(\mathbf{x}_i,\mathbf{x}_k)/D,
\qquad
C_Y(j,\ell) = d_{G_Y}(\mathbf{y}_j,\mathbf{y}_\ell)/D.
\label{eq:template-shortest-path-cost}
\end{align}
To also favor matches consistent with the upstream pose, we define a cross-cost that penalizes the squared Euclidean distance between a scene point and the predicted location of a template point:
\begin{equation}
M^{\mathrm{pose}}_{ij} =\frac{\|\mathbf{x}_i-
(\mathbf{y}_j\mathbf{R}_0^\top+\mathbf{t}_0)\|_2^2}{D^2}.
\end{equation}
Division by $D^2$ makes this quantity dimensionless.
Stage II then solves the following entropically regularized $\textrm{FGWD}_{\epsilon,\alpha}$ objective with an additional KL penalty, using the EGGroW algorithm:
\begin{equation}
\begin{aligned}
\min_{\mathbf{T}\geq0}\quad&
\lambda_G\mathcal{G}(\mathbf{T})
+\lambda_M\langle \mathbf{M}^{\mathrm{pose}},\mathbf{T}\rangle+\varepsilon\sum_{i,j}T_{ij}(\log T_{ij}-1)
+\tau\KL(\mathbf{q}\|\mathbf{r})\\
\text{s.t.}\quad&\mathbf{T}\one_n=\mathbf{\widehat p},
\qquad \mathbf{q}=\mathbf{T}^\top\one_m
\end{aligned}
\label{eq:stage2}
\end{equation}
The KL term is a soft visibility penalty: $\mathbf{q}$ can depart from the raycast reference when the relational geometry favors another visible region. Additional details for how we integrate the KL term with the FGWD objective in the optimization formulation are reported in App. \ref{app:kl-sinkhorn}. In contrast, the row-marginal constraint is enforced exactly; any capacity or affine-moment constraints used in a variant are likewise imposed as hard constraints. As we show in Sec. \ref{sec:exp-template}, EGGroW algorithm makes this 3D template detection algorithm computationally feasible for large-size templates and scenes. We will also show in Sec. \ref{sec:exp-template} that proposed EGGroW-powered 3D detection method is capable of finding templates of various profiles, in particular of irregular shapes.

\section{Experiments}
\label{sec:exp}
\subsection{Pose Correspondence}
\label{sec:exp-pose}

We evaluate EGGroW on FAUST meshes \cite{Bogo:CVPR:2014} (see Fig. \ref{fig:faust-gw-correspondence}). The registrations share vertex identities, giving ground-truth correspondences. We use source scan $004$ and three target scans from the same subject: $007$, $006$, and $003$. For each pair, we sample $n=512,1024,2000,4096,6890$ vertices, construct weighted mesh graphs from triangle edges, compute sampled geodesic matrices through the full graph, and use uniform marginals. For FGWD, $M_{ij}$ is the squared Euclidean distance between normalized 3D coordinates and we set $\alpha=0.95$, so the objective is mostly intrinsic GWD with a small Euclidean feature term. 

\begin{figure}[!ht]
\centering
\includegraphics[width=\linewidth]{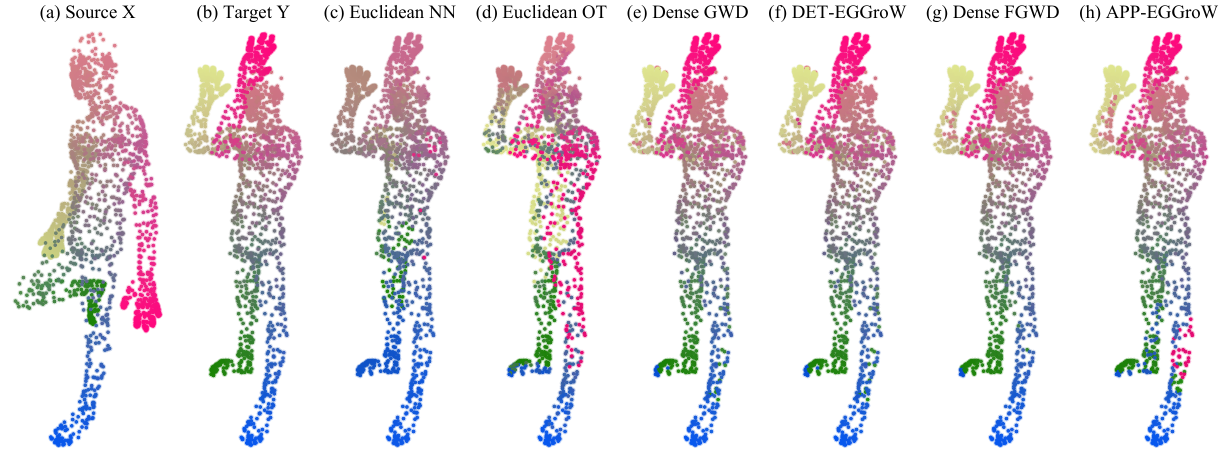}
\caption{FAUST pose correspondence for (a) source scan $004$ and (b) target scan $003$. Colors are intrinsic source colors transferred to the target by each method (c)-(h).}
\label{fig:faust-gw-correspondence}
\end{figure}

We compare against two Euclidean baselines. Euclidean nearest neighbors (NN) independently maps each source point to the closest target point in normalized 3D space: $j(i)=\arg\min_j \|\mathbf{x}_i-\mathbf{y}_j\|^2.$ This baseline is purely local and can send many source points to the same target region. Euclidean OT instead solves entropic optimal transport with cost $M_{ij}=\|\mathbf{x}_i-\mathbf{y}_j\|^2$ and uniform marginals. This enforces global mass balance, but it is still based only on extrinsic 3D proximity. Both baselines can therefore fail when unrelated body parts become close in Euclidean space. Tab.~\ref{tab:faust-gw-results} shows performance, which we primarily measure using mean target-geodesic correspondence error; lower is better. Further details and results can be found in App.~\ref{sec:additional-numerical-results}.

\begin{table}[!ht]
\centering
\small
\caption{Mean target-geodesic correspondence error on three FAUST same-subject pose pairs at $n=2000$ vertices. Euclidean OT and FGWD use squared Euclidean cross-feature costs.}
\setlength{\tabcolsep}{4pt}

\begin{tabular}{lrrrrrrrr}
\hline
Pair & NN & OT & GWD & FGWD & DET
& APP$_{64}$ & APP$_{256}$ & APP$_{1024}$ \\
\hline
\multicolumn{9}{l}{\textbf{Geodesic correspondence error}
$\downarrow$} \\[2pt]
$004\to003$ & 0.574 & 0.636 & \textbf{0.332}
& \textbf{0.166} & \textbf{0.166}
& 0.240 & 0.232 & 0.230 \\
$004\to006$ & 0.518 & 0.641 & \textbf{0.323}
& \textbf{0.108} & \textbf{0.108}
& 0.252 & 0.206 & 0.206 \\
$004\to007$ & 0.665 & 0.633 & \textbf{0.354}
& \textbf{0.223} & \textbf{0.223}
& 0.281 & 0.261 & 0.237 \\
\hline
\multicolumn{9}{l}{\textbf{Runtime (seconds)}
$\downarrow$} \\[2pt]
$004\to003$ & - & - & $896.4$
& 1077.2 & 587.0 & \textbf{320.8} & 357.1 & 535.1 \\
$004\to006$ & - & - & $794.3$
& 878.8 & 585.3 & \textbf{322.3} & 354.4 & 539.0 \\
$004\to007$ & - & - & $831.1$
& 996.5 & 585.0 & \textbf{322.7} & 357.1 & 531.3 \\
\hline
\end{tabular}

\label{tab:faust-gw-results}
\end{table}

Fig.~\ref{fig:faust-prf-fgw-timing} shows the expected accuracy--runtime tradeoff for APP-EGGroW. Increasing the number of random features improves the mean geodesic error, with $1024$ features giving the best APP-EGGroW accuracy in this pose pair.  At the same time, smaller feature counts are faster than the Dense GWD/FGWD baselines, and both DET-EGGroW and APP-EGGroW runtimes grow significantly slower than baseline methods.

\begin{figure}[!ht]
    \centering
    \includegraphics[width=\linewidth]{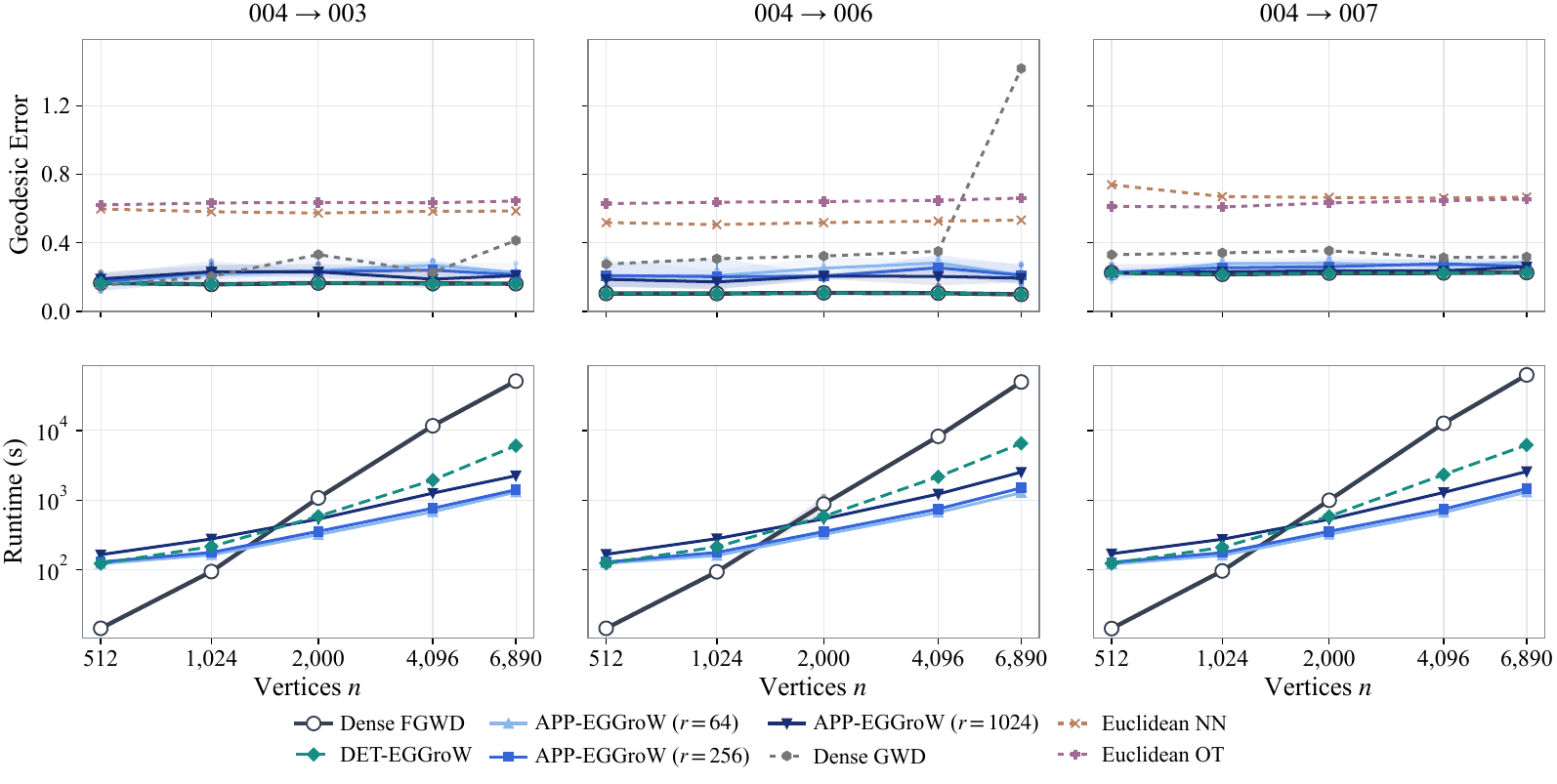}
    \caption{Error (top) and runtime (bottom) across different methods as $n$ increases.}
    \label{fig:faust-prf-fgw-timing}
\end{figure}

\subsection{Template Detection}
\label{sec:exp-template}
We compare Dense GWD, Dense FGWD, DET-EGGroW, APP-EGGroW, and Scetborn-Cuturi \cite{LT-GWD} across increasing sizes meshes—Fig.~\ref{fig:octopus-room-template-scene} shows the scene (i.e., a room) and one of the templates (i.e., the octopus tail). ET-EGGroW matches dense transport plans to numerical precision and accelerates larger ribbon and point-cloud instances, but remains slower on sphere–ellipsoid meshes because of preprocessing and iteration overhead. APP-EGGroW is fastest on the larger Lego block and Octopus instances, at the cost of transport-plan approximation error. For more setup details, see Sec.~\ref{sec:template-method} and App.~\ref{sec:template-detection-appendix}.

\begin{figure}[!ht]
    \centering
    \includegraphics[width=\linewidth, trim={0 0 8mm 0}, clip]{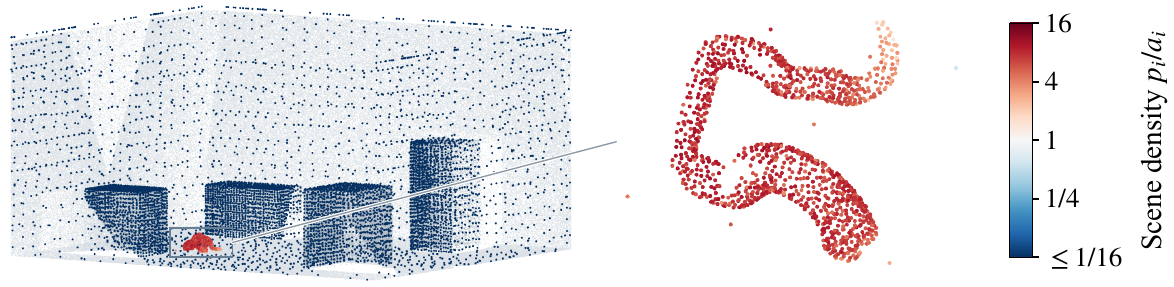}
    \caption{Octopus template localization where colors show the recovered scene density $p_i/a_i$; gray points provide scene context.}
    \label{fig:octopus-room-template-scene}
\end{figure}

\begin{figure}[!ht]
    \centering
    \includegraphics[width=1\linewidth]{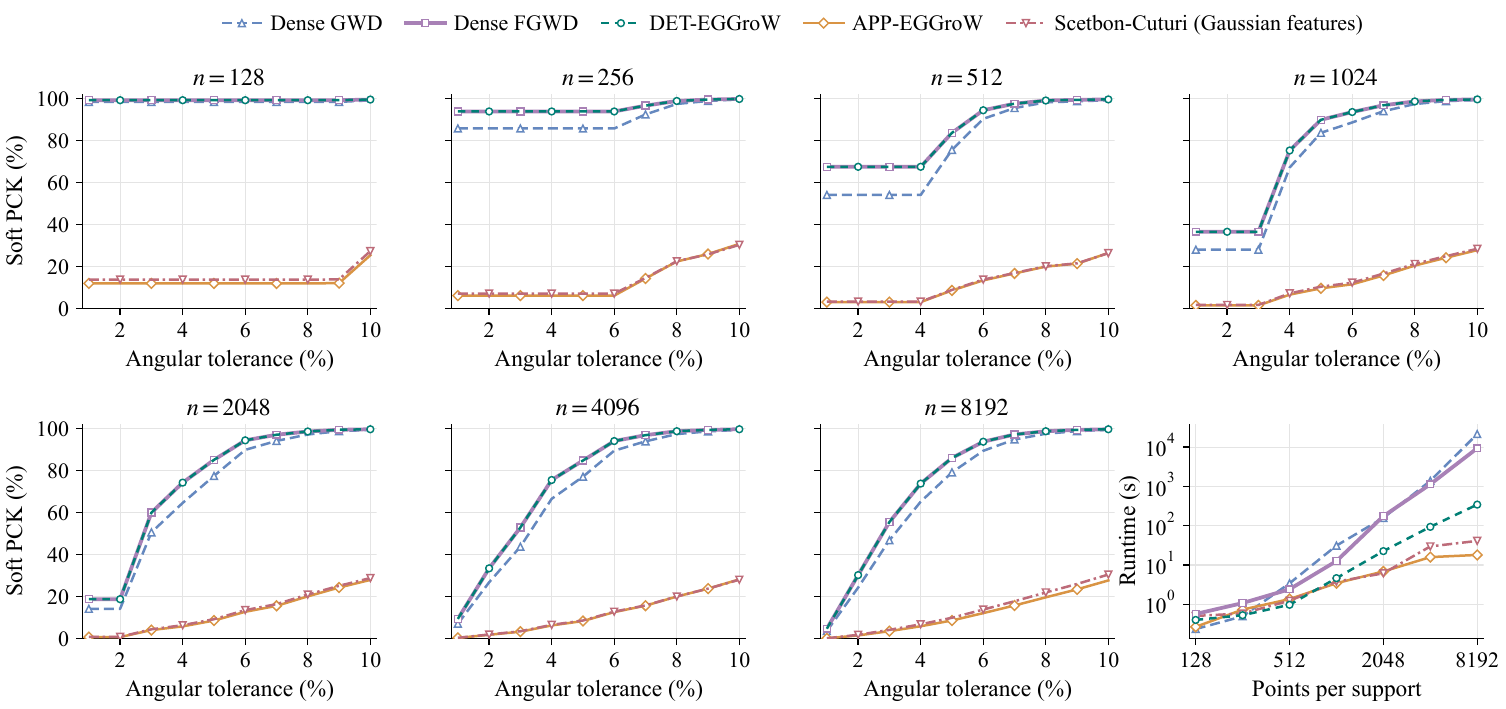}
    \caption{Mean accuracy and runtime (bottom right) at increasing mesh sizes for each of the different methods under comparison.}
    \label{fig:placeholder}
\end{figure}

A successful transport plan concentrates mass around the correct correspondences. We measure this concentration using soft PCK, that is, the fraction of transport mass assigned to pairs within a prescribed error tolerance. For our surfaces, correspondence error is the angle between their known underlying spherical directions. A 10\% tolerance therefore permits an angular error of 18°, capturing how reliably each method recovers the correct region of the shape. Our results show that DET-EGGroW follows Dense FGWD across the entire tolerance range, preserving the reference method's correspondence accuracy. Feature fusion is particularly helpful when matches must be precise: at $n=1024$ and $1\%$ tolerance, soft PCK increases from approximately $28\%$ for Dense GWD to $37\%$ for Dense FGWD and DET-EGGroW. Furthermore, at $10\%$ tolerance, all three exceed $99\%$, indicating that most transport mass remains near the correct region of the shape. Finally, limitations and future work for both template detection and pose correspondence can be found in App.~\ref{sec:limitations-future-work}.
\section{Conclusion}
\label{sec:conclusion}
We presented in this paper \textbf{E}fficient \textbf{G}eodesic \textbf{Gro}mov-\textbf{W}asserstein (EGGroW) algorithms for efficiently computing Gromov-Wasserstein distances, with metrics induced by shortest-path distance on graphs. EGGroW is characterized by near-quadratic computational time (as compared to the cubic of its brute-force entropic GWD counterpart) and leverages methods based on graphs factorization with small-size separators, proposed in GenusSink algorithm, as well as positive random features techniques used in efficient attention modules for Transformers. Our theoretical analysis is confirmed by experimental validation on the problems on 3D pose estimation and 3D template detection. Our proposed GWD-based template detection algorithm critically relies on the computational efficiency of EGGroW to handle large input point clouds / meshes.

\subsection*{AI use statement}

We have not used generative AI tools to produce the content of this paper.

\subsection*{Ethics statement}

We believe that our paper does not raise any questions regarding the Code
of Ethics. In particular, none of our studies involves human subjects. We are also not aware of any potential conflicts of interest related to this submission.

\subsection*{Reproducibility statement}

We have described in depth all presented algorithms, as well as the experimental setup in which they were used, so that all our experimental results can be accurately reproduced. In particular, we dedicate a separate section just to describe in depth proposed 3D template detection algorithm (see: Sec. \ref{sec:template-method}). All additional details (hyperparams, datasets used, etc.) are listed in the corresponding sections (see in particular: Sec. \ref{sec:exp}). We will open source our code upon acceptance of the manuscript.

\bibliography{iclr2027_conference}
\bibliographystyle{iclr2027_conference}

\appendix
\section{Additional details for 3D template detection}

\subsection{Initial rigid alignment from the Stage I coupling}
\label{app:stage1-pose}

Stage I provides a soft correspondence matrix $\mathbf T^{(1)}$, where $T^{(1)}_{ij}$ measures the correspondence mass between scene point $\mathbf x_i$ and template point $\mathbf y_j$. We use these correspondences to estimate a single rigid transformation of the template into scene coordinates. With point coordinates written as row vectors, the transformation maps
\[
\mathbf y_j \mapsto \mathbf y_j\mathbf R^\top+\mathbf t.
\]
We estimate the initial rotation and translation by solving
\begin{equation}
(\mathbf R_0,\mathbf t_0)
\in \operatorname*{arg\,min}_{
\mathbf R\in\mathrm{SO}(3),\,
\mathbf t\in\mathbb R^{1\times3}}
\sum_{i=1}^{m}\sum_{j=1}^{n}
T^{(1)}_{ij} \left\| \mathbf x_i-
(\mathbf y_j\mathbf R^\top+\mathbf t) \right\|_2^2,
\label{eq:stage1-pose-fit}
\end{equation}
where
\[
\mathrm{SO}(3) = \left\{\mathbf R\in\mathbb R^{3\times3}:
\mathbf R^\top\mathbf R=\mathbf I_3,\, \det(\mathbf R)=1
\right\}.
\]
Thus, scene--template pairs with larger Stage I correspondence mass have greater influence on the fitted pose. We use all weighted candidate matches rather than first selecting a single match for each point.

\paragraph{Weighted centering.}
Let
\[
\mathbf p^{(1)}=\mathbf T^{(1)}\one_n
\]
be the scene marginal learned in Stage I. Since the template marginal is fixed to $\mathbf b$, the weighted centroids of the scene and template are
\begin{equation}
\bar{\mathbf x} =
\sum_{i=1}^{m}p_i^{(1)}\mathbf x_i,
\qquad \bar{\mathbf y}=\sum_{j=1}^{n}b_j\mathbf y_j.
\label{eq:stage1-pose-centroids}
\end{equation}
For any fixed rotation $\mathbf R$, the optimal translation aligns these two centroids:
\[
\mathbf t = \bar{\mathbf x} - \bar{\mathbf y}\mathbf R^\top.
\]
We can therefore estimate the rotation using the centered coordinates
\[
\widetilde{\mathbf x}_i = \mathbf x_i-\bar{\mathbf x},
\qquad
\widetilde{\mathbf y}_j = \mathbf y_j-\bar{\mathbf y}.
\]

\paragraph{Rotation and translation.}
Using the centered points, define the weighted cross-covariance matrix
\begin{equation}
\mathbf H = \sum_{i=1}^{m}\sum_{j=1}^{n}
T^{(1)}_{ij} \widetilde{\mathbf y}_j^\top \widetilde{\mathbf x}_i \in\mathbb R^{3\times3}.
\label{eq:stage1-pose-covariance}
\end{equation}
This matrix summarizes how the centered template and scene points are aligned under the Stage I correspondence weights. Substituting the optimal translation into Eq.~\ref{eq:stage1-pose-fit} reduces the problem to finding the rotation that maximizes $\operatorname{tr}(\mathbf R\mathbf H)$ over $\mathbf R\in\mathrm{SO}(3)$. We solve this weighted alignment problem using the Kabsch algorithm~\citep{kabsch1976rotation}.

Let
\[
\mathbf H = \mathbf U\boldsymbol\Sigma\mathbf V^\top
\]
be a singular value decomposition, and define
\begin{equation}
\mathbf J = \operatorname{diag}
\left( 1,1,\det(\mathbf V\mathbf U^\top)
\right).
\end{equation}
The estimated rotation and translation are then
\begin{equation}
\mathbf R_0 = \mathbf V\mathbf J\mathbf U^\top,
\qquad \mathbf t_0
= \bar{\mathbf x} - \bar{\mathbf y}\mathbf R_0^\top.
\label{eq:stage1-pose-solution}
\end{equation}
The matrix $\mathbf J$ prevents the solution from introducing a reflection, so that $\mathbf R_0$ is a valid rotation.

\paragraph{Use in Stage II.}
We compute $(\mathbf R_0,\mathbf t_0)$ once after Stage I and hold it fixed throughout Stage II. The fitted pose is used to raycast the aligned template and construct the visibility reference $\mathbf q_{\mathrm{ray}}$, and it also enters the pose-conditioned cross-cost $\mathbf M^{\mathrm{pose}}$. Stage II therefore refines
template visibility and correspondence without re-estimating the rigid pose. Computing the alignment requires only the weighted centroids, the $3\times3$ matrix $\mathbf H$, and a $3\times3$ singular value decomposition; it does not require an additional GWD or Sinkhorn optimization.

\subsection{KL-regularized Stage II and generalized Sinkhorn updates}
\label{app:kl-sinkhorn}
We describe how the KL-divergence regularizer in Stage II modifies the Sinkhorn iterations. Stage II fixes the predicted scene distribution $\widehat{\mathbf p}$ and learns the template distribution
\[
\mathbf q=\mathbf T^\top\one_m.
\]
As in Sec.~4.3, we use the raycast-based reference $\mathbf r=0.98\mathbf q_{\mathrm{ray}}+0.02\mathbf b$. With $\mathbf q_{\mathrm{ray}}$ and $\mathbf b$ normalized and $\mathbf b$ strictly positive, $\mathbf r$ is a strictly positive probability vector. Thus, the KL penalty encourages the learned template distribution to
remain close to the raycast prediction, while the positive reference weights ($r_j>0$) allow every template point to receive correspondence mass.

For probability vectors $\mathbf q,\mathbf r\in\Delta_n$, we use
\begin{equation}
\KL(\mathbf q\|\mathbf r) = \sum_{j=1}^{n} \left( q_j\log\frac{q_j}{r_j}-q_j+r_j \right)
= \sum_{j=1}^{n} q_j\log\frac{q_j}{r_j},
\label{eq:appendix-kl-definition}
\end{equation}
where $0\log 0=0$. The second equality follows because
$\sum_j q_j=\sum_j r_j=1$. The coefficient $\tau$ controls the strength of this soft visibility prior: larger values of $\tau$ encourage the learned marginal
$\mathbf q$ to remain closer to $\mathbf r$.

\paragraph{Inner optimization problem.}
We use the two-level iteration described in Sec.~2.2. As defined in the main body, at outer iteration $k$, its Stage-II specialization is
\begin{equation}
\begin{aligned}
\mathbf q^{(k)}
&=(\mathbf T^{(k)})^\top\one_m,\\
\mathbf D_{\mathbf T^{(k)}}
&=(\mathbf C_X^{\circ 2}\widehat{\mathbf p})\one_n^\top+
\one_m
(\mathbf C_Y^{\circ 2}\mathbf q^{(k)})^\top-
2\mathbf C_X\mathbf T^{(k)}\mathbf C_Y^\top,\\
\mathbf Q_{\mathbf T^{(k)}}
&=\lambda_G\mathbf D_{\mathbf T^{(k)}}
+\lambda_M\mathbf M^{\mathrm{pose}},\\
\mathbf K_{\mathbf T^{(k)}}
&=\exp\!\left(-\frac{\mathbf Q_{\mathbf T^{(k)}}}{\varepsilon}\right).
\end{aligned}
\label{eq:appendix-kl-kernel}
\end{equation}
 Here $\mathbf D_{\mathbf T^{(k)}}$ is the gradient of the loss $\mathcal G$ in Eq.~\ref{eq:gw-half-loss}, with $\mathbf C_X$ and $\mathbf C_Y$ being the normalized shortest-path distances defined in Eq.~\ref{eq:template-shortest-path-cost}. 

We hold $\mathbf Q_{\mathbf T^{(k)}}$ fixed during the inner iterations and solve
\begin{equation}
\min_{\substack{\mathbf T\geq0\\
\mathbf T\one_n=\widehat{\mathbf p}}}
\left\langle
\mathbf Q_{\mathbf T^{(k)}},\mathbf T
\right\rangle + \varepsilon
\sum_{i,j}T_{ij}(\log T_{ij}-1) + \tau \KL(\mathbf T^\top\one_m\|\mathbf r).
\label{eq:appendix-kl-inner}
\end{equation}
At each outer iteration, the nonlinear structural GWD term is replaced by its first-order approximation around the current coupling $\mathbf T^{(k)}$. The entropy and KL terms are kept in their original form in the inner problem. This gives a semi-relaxed optimization: the scene marginal $\widehat{\mathbf p}$ is fixed exactly, while the template marginal is learned and softly regularized toward $\mathbf r$. The derivation here concerns this formulation without additional capacity or affine-moment constraints.

\paragraph{Generalized Sinkhorn updates.}
We now derive how the KL penalty modifies the usual Sinkhorn column scaling. The updates are summarized in the following lemma.

\begin{lemma}
Suppose $\mathbf Q_{\mathbf T^{(k)}}$ is finite,
$\varepsilon>0$, $\tau>0$, and $\widehat{\mathbf p}$ and $\mathbf r$ are strictly positive probability vectors. Then Eq.~\ref{eq:appendix-kl-inner} has a unique minimizer of the form
\begin{equation}
\mathbf T = \operatorname{diag}(\mathbf u)
\mathbf K_{\mathbf T^{(k)}}
\operatorname{diag}(\mathbf v),
\label{eq:appendix-kl-factorization}
\end{equation}
where
\begin{equation}
\mathbf u =
\frac{\widehat{\mathbf p}}
{\mathbf K_{\mathbf T^{(k)}}\mathbf v},
\qquad
\mathbf v =
\left(
\frac{\mathbf r}
{\mathbf K_{\mathbf T^{(k)}}^\top\mathbf u}
\right)^{\theta},
\qquad
\theta =
\frac{\tau}{\tau+\varepsilon}.
\label{eq:appendix-kl-scaling}
\end{equation}
All divisions and powers are applied elementwise.
\end{lemma}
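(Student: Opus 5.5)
We first establish existence and uniqueness, then read off the scaling form from the first-order optimality conditions.

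\emph{Existence and uniqueness.} The feasible set $\{\mathbf T\geq 0:\mathbf T\one_n=\widehat{\mathbf p}\}$ is a nonempty compact polytope, since every entry is bounded by $\max_i\widehat p_i$. On it, the objective of Eq.~\ref{eq:appendix-kl-inner} is continuous, because $x\log x$ extends continuously to $0$ and $\mathbf Q_{\mathbf T^{(k)}}$ is finite. The objective is also strictly convex:
\begin{enumerate}
\item the linear term is convex;
\item $\varepsilon\sum_{i,j}T_{ij}(\log T_{ij}-1)$ is strictly convex in $\mathbf T$ for $\varepsilon>0$;
\item $\mathbf T\mapsto\tau\KL(\mathbf T^\top\one_m\|\mathbf r)$ is a convex function composed with a linear map, hence convex.
\end{enumerate}
So a minimizer exists and is unique.

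\emph{Interior minimizer.} We next show the minimizer has all $T_{ij}>0$. The derivative of $T\log T$ is $-\infty$ at $0$. This dominates the KL term, whose column derivative $\tau\log(q_j/r_j)$ also tends to $-\infty$ as $q_j\to0$, so it pushes mass in the same direction. Concretely, suppose $T_{ij}=0$. Since $\widehat p_i>0$, row $i$ has some entry $T_{ij'}>0$. Moving a small mass $\delta$ from $(i,j')$ to $(i,j)$ stays feasible, and its directional derivative is $-\infty$ as $\delta\to0^+$. This contradicts optimality. Positivity is what licenses the Lagrangian stationarity conditions below.

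\emph{Stationarity.} Introduce multipliers $\mathbf f\in\R^m$ for the row constraint only, since the column marginal is free. Differentiating in $T_{ij}$ and writing $q_j=\sum_i T_{ij}$ gives
\begin{equation*}
Q_{ij}+\varepsilon\log T_{ij}+\tau\log(q_j/r_j)-f_i=0.
\end{equation*}
Hence $T_{ij}=u_iK_{ij}v_j$, with $u_i=e^{f_i/\varepsilon}$ and $v_j=(r_j/q_j)^{\tau/\varepsilon}$. This is the factorization in Eq.~\ref{eq:appendix-kl-factorization}.

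\emph{Fixed-point equations for $\mathbf u$ and $\mathbf v$.} The row constraint gives $u_i(\mathbf K\mathbf v)_i=\widehat p_i$, which is the first formula in Eq.~\ref{eq:appendix-kl-scaling}. For the columns, $q_j=v_j(\mathbf K^\top\mathbf u)_j$, so
\begin{equation*}
v_j=\bigl(r_j/(v_j(\mathbf K^\top\mathbf u)_j)\bigr)^{\tau/\varepsilon}.
\end{equation*}
Raising both sides to the power $\varepsilon/\tau$ and collecting powers of $v_j$ gives $v_j^{1+\tau/\varepsilon}=(r_j/(\mathbf K^\top\mathbf u)_j)^{\tau/\varepsilon}$. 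Therefore $v_j=(r_j/(\mathbf K^\top\mathbf u)_j)^{\theta}$ with $\theta=\tau/(\tau+\varepsilon)$, as claimed.

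\emph{Converse and uniqueness of the form.} Any positive $\mathbf u,\mathbf v$ satisfying these equations produces a $\mathbf T$ that satisfies the KKT conditions of a convex problem. Such a $\mathbf T$ is therefore the unique minimizer. The scaling $(\mathbf u,\mathbf v)$ itself is then unique up to nothing, because each vector is determined by the other through positive quantities.

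The main obstacle is the interior argument. It needs care because the KL gradient also diverges at $q_j=0$. The fix is to note that both divergences have the same sign, so they cannot cancel and the perturbation argument goes through.
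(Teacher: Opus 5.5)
Your proposal is correct and follows essentially the same route as the paper: compactness and strict convexity give existence and uniqueness, row multipliers give $T_{ij}=u_iK_{ij}(r_j/q_j)^{\tau/\varepsilon}$, and substituting $q_j=v_j(\mathbf K^\top\mathbf u)_j$ yields $\theta=\tau/(\tau+\varepsilon)$. Your perturbation argument for strict positivity fills in a step the paper simply assumes (``stationarity at a positive optimum''), and the KKT converse is a sound addition; two small slips: the identity $v_j^{1+\tau/\varepsilon}=(r_j/(\mathbf K^\top\mathbf u)_j)^{\tau/\varepsilon}$ comes from multiplying by $v_j^{\tau/\varepsilon}$, not from raising to the power $\varepsilon/\tau$, and uniqueness of $(\mathbf u,\mathbf v)$ (which the lemma does not require) holds because $v_j=(r_j/q_j)^{\tau/\varepsilon}$ is fixed by the unique $\mathbf T$, not merely because each vector determines the other.
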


\begin{proof}
The feasible set is nonempty and compact, and the entropy term makes the inner objective strictly convex, so the minimizer is unique. Introducing a multiplier $\eta_i$ for each fixed row sum, stationarity at a positive optimum gives
\begin{equation}
Q_{ij} + \varepsilon\log T_{ij}
+ \tau\log\frac{q_j}{r_j} +
\eta_i = 0,
\end{equation}
where
$\mathbf q=\mathbf T^\top\one_m$ and
$\mathbf Q=\mathbf Q_{\mathbf T^{(k)}}$.
Hence
\begin{equation}
T_{ij} = u_i
(\mathbf K_{\mathbf T^{(k)}})_{ij} \left(
\frac{r_j}{q_j} \right)^{\tau/\varepsilon},
\label{eq:appendix-kl-factor}
\end{equation}
with $u_i=\exp(-\eta_i/\varepsilon)$. Writing
\[
v_j=\left(\frac{r_j}{q_j}\right)^{\tau/\varepsilon}
\]
gives
$\mathbf T=\operatorname{diag}(\mathbf u) \mathbf K_{\mathbf T^{(k)}} \operatorname{diag}(\mathbf v)$.
Since
\[
q_j = v_j (\mathbf K_{\mathbf T^{(k)}}^\top\mathbf u)_j,
\]
we obtain
\begin{equation}
v_j^{\,1+\tau/\varepsilon}
= \left( \frac{r_j}
{(\mathbf K_{\mathbf T^{(k)}}^\top\mathbf u)_j} \right)^{\tau/\varepsilon},
\end{equation}
which yields the exponent $\theta=\tau/(\tau+\varepsilon)$ in Eq.~\ref{eq:appendix-kl-scaling}. The fixed row marginal gives the stated update for $\mathbf u$.
\end{proof}

Starting from $\mathbf v^{(0)}=\one_n$, the inner iterations alternate
\begin{equation}
\mathbf u^{(\ell+1)} =
\frac{\widehat{\mathbf p}}
{\mathbf K_{\mathbf T^{(k)}}\mathbf v^{(\ell)}},
\qquad
\mathbf v^{(\ell+1)} =
\left(\frac{\mathbf r}
{\mathbf K_{\mathbf T^{(k)}}^\top \mathbf u^{(\ell+1)}}
\right)^{\theta}.
\label{eq:appendix-kl-iterations}
\end{equation}
The row update enforces the fixed scene marginal, while the column update incorporates the soft prior on template visibility. The resulting template marginal is
\begin{equation}
\mathbf q=
\mathbf v\odot
\left(\mathbf K_{\mathbf T^{(k)}}^\top\mathbf u
\right).
\end{equation}
For $\tau=0$, $\theta=0$ and $\mathbf v=\one_n$, so the reference distribution has no effect. As $\tau$ increases, the learned marginal is pulled more strongly toward $\mathbf r$. In the extreme case, as $\tau\rightarrow\infty$, the update approaches the balanced scaling that fixes the template marginal to $\mathbf r$.

\paragraph{EGGroW computations.}
We next show that the generalized Sinkhorn updates above can be implemented within the same EGGroW framework used for the standard entropic problem. The KL penalty changes only the scaling updates and does not alter the two kernel--vector products required at each inner iteration. DET-EGGroW evaluates the structural matrix operations used to construct $\mathbf Q_{\mathbf T^{(k)}}$ using the deterministic methods of Sec.~3.1.

For APP-EGGroW, let the positive random-feature approximation of the Stage-II kernel be
\begin{equation}
\widehat{\mathbf K}_{\mathbf T^{(k)}} =
\operatorname{diag}(\mathbf s) \Phi_X^\top\Phi_Y
\operatorname{diag}(\mathbf t).
\label{eq:appendix-kl-feature-kernel}
\end{equation}
The products are then evaluated as
\begin{equation}
\begin{aligned}
\widehat{\mathbf K}_{\mathbf T^{(k)}}\mathbf v &=
\mathbf s\odot \Phi_X^\top
\bigl( \Phi_Y(\mathbf t\odot\mathbf v)
\bigr),\\
\widehat{\mathbf K}_{\mathbf T^{(k)}}^\top\mathbf u &=
\mathbf t\odot
\Phi_Y^\top \bigl( \Phi_X(\mathbf s\odot\mathbf u)
\bigr).
\end{aligned}
\label{eq:appendix-kl-feature-products}
\end{equation}
The features and scale factors are constructed using the Stage-II coefficients $\lambda_G$ and $\lambda_M$ in place of $\alpha$ and $1-\alpha$, respectively. The resulting approximate coupling remains factorized:
\begin{equation}
\widehat{\mathbf T} =
\mathbf L^\top\mathbf R,
\qquad
\mathbf L =
\Phi_X
\operatorname{diag}(\mathbf s\odot\mathbf u),
\qquad
\mathbf R =
\Phi_Y
\operatorname{diag}(\mathbf t\odot\mathbf v).
\label{eq:appendix-kl-coupling-factors}
\end{equation}
Consistent with Sec.~3.2, we retain both scale factors. In particular, the template-side factor $\mathbf t$ affects the learned marginal $\mathbf q$ and therefore cannot be absorbed or discarded. The KL penalty only modifies the scaling updates and does not require additional random features or a larger factorization.

\paragraph{Numerical implementation.}
We implement the generalized Sinkhorn updates in a numerically stable form and explicitly enforce the fixed scene marginal before returning each inner solution. For the dense implementation, we evaluate the scaling updates in the log domain. Writing $\operatorname{LSE}$ for log-sum-exp,
\begin{equation}
\begin{aligned}
\log u_i &=
\log\widehat p_i - \operatorname{LSE}_j
\left( -\frac{Q_{ij}}{\varepsilon} +\log v_j \right),\\
\log v_j
&= \theta \left[ \log r_j -
\operatorname{LSE}_i \left(-\frac{Q_{ij}}{\varepsilon}
+\log u_i\right)\right].
\end{aligned}
\label{eq:appendix-kl-log-scaling}
\end{equation}
These are the log-domain forms of the updates in
Eq.~\eqref{eq:appendix-kl-scaling} and avoid numerical underflow or overflow in the exponential kernel. APP-EGGroW evaluates the same kernel--vector products through the factorization in Eq.~\eqref{eq:appendix-kl-feature-products}, while retaining the associated scale factors.

During the inner iterations, we monitor the row-marginal residual and the change in the scaling variables. After the final column update, we recompute
\[
\mathbf u = \frac{\widehat{\mathbf p}}
{\mathbf K_{\mathbf T^{(k)}}\mathbf v}
\]
so that the returned coupling satisfies the fixed scene marginal $\mathbf T\one_n=\widehat{\mathbf p}$. If $\widehat p_i=0$, the corresponding row is omitted from the log-domain scaling iterations and restored as a zero row afterward. The resulting coupling defines the next outer iterate $\mathbf T^{(k+1)}$, which is used to recompute $\mathbf D_{\mathbf T^{(k+1)}}$ and construct the next Stage-II kernel.

\section{Additional Numerical Results}
\label{sec:additional-numerical-results}

\subsection{Random-feature ablation of APP-EGGroW}
\label{app:rf-ablation}

We compare five positive random-feature constructions for APP-EGGroW on
FAUST $004\to007$ (Table~\ref{tab:faust-rf-pose-ablation}). For
$n\in\{1{,}000,2{,}000\}$, each construction approximates one balanced
Sinkhorn update at the dense GWD iterates $T^{(20)}$ and $T^{(50)}$,
with $r=512$, $\alpha=1$, $\varepsilon=0.05$, and centered diagonal
conditioning. Note that these are dense GWD only, whereas in the Sec.~\ref{sec:exp-pose} we approximated FGWD using APP-EGGroW instead.

Structural costs are full-mesh graph shortest paths with Euclidean edge
weights, jointly scaled by the larger sampled 95th percentile. Source and
target use matching vertex samples and uniform marginals. Sinkhorn uses
tolerance $10^{-9}$ and at most 500 iterations; correspondences are obtained
by row-wise maximization of the coupling.

\begin{table}[!ht]
\centering
\small
\setlength{\tabcolsep}{4pt}
\renewcommand{\arraystretch}{1.08}
\begin{tabular}{lcccc}
\hline
Random-feature & \multicolumn{2}{c}{$n=1{,}000$} & \multicolumn{2}{c}{$n=2{,}000$} \\
\hline
construction & Geodesic error $\downarrow$ & PCK@5\% $\uparrow$ & Geodesic error $\downarrow$ & PCK@5\% $\uparrow$ \\
\hline
IID PRF & $0.310\pm0.082$ & $22.50\pm3.16$ & $0.465\pm0.029$ & $18.88\pm3.82$ \\
OPRF-IID & $0.325\pm0.092$ & $21.84\pm2.57$ & $0.457\pm0.036$ & $19.09\pm4.13$ \\
Orthogonal PRF & $0.421\pm0.044$ & $20.79\pm2.71$ & $0.430\pm0.113$ & $17.95\pm1.78$ \\
Simplex PRF & $0.422\pm0.044$ & $20.80\pm2.70$ & $0.430\pm0.113$ & $17.97\pm1.77$ \\
OPRF-Orthogonal & $0.424\pm0.048$ & $20.53\pm2.75$ & $0.433\pm0.116$ & $17.78\pm1.87$ \\
\hline
Dense-kernel reference & $0.290$ & $25.15$ & $0.358$ & $22.58$ \\
\hline
\end{tabular}
\caption{Random-feature ablation on FAUST. RF entries are mean $\pm$ sample
standard deviation over five/three seeds at $n=1{,}000/2{,}000$, after
averaging the two frozen updates. Geodesic error is the mean raw target
distance; PCK@5\% is the percentage within 5\% of the sampled target
geodesic diameter. The dense reference averages the same states.
OPRF denotes optimized positive random features.}
\label{tab:faust-rf-pose-ablation}
\end{table}

Among RF variants, IID features achieve the lowest mean geodesic error and
highest PCK at $n=1{,}000$. At $n=2{,}000$, orthogonal and simplex features
give lower mean geodesic errors, whereas OPRF-IID achieves the highest PCK.

\subsection{Pose Correspondence}
\label{sec:pose-correspondence-appendix}
To complement the pose correspondence experiments in Sec.~\ref{sec:exp-pose}, Fig.~\ref{fig:faust-gw-correspondence-006-007} presents additional qualitative results from source scan $004$ to target scans $006$ and $007$. Similar to Fig.~\ref{fig:faust-gw-correspondence}, transferring intrinsic source colors to each target visualizes how well each method preserves correspondence across changes in pose. The additional poses highlight the challenges introduced by articulation, particularly when distinct limbs become close in Euclidean space despite remaining distant along the mesh. We used \(\varepsilon=0.05\) and ran each experiment for 50 outer updates with 500 inner Sinkhorn iterations per update.

Regarding further experiment setup details, we normalized structural distances by the full-source off-diagonal geodesic 95th percentile, centered each mesh, and scaled both coordinate sets by the source RMS radius. Numerical correspondences were extracted through row-wise maximization of the coupling, with normalized errors measured relative to the full target diameter and shaded bands showing the observed minimum--maximum across runs. Moreover, we timed method-specific setup and solving, excluding shared calibration, input loading, and post-solve scoring.

\begin{figure}[!ht]
\centering
\includegraphics[width=\linewidth]{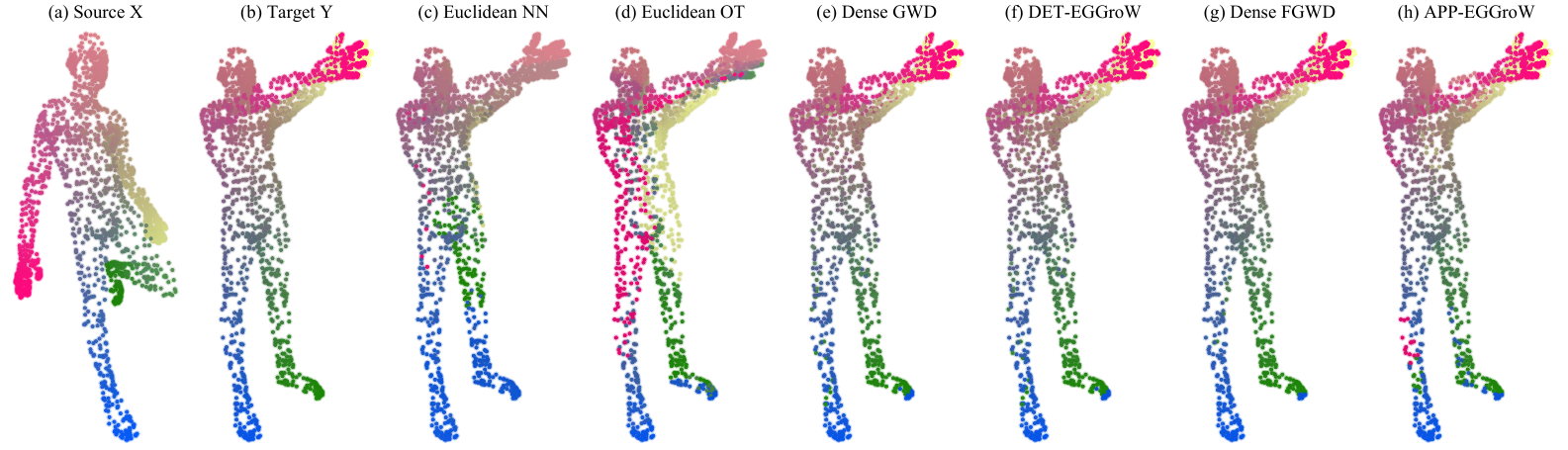}
\includegraphics[width=\linewidth]{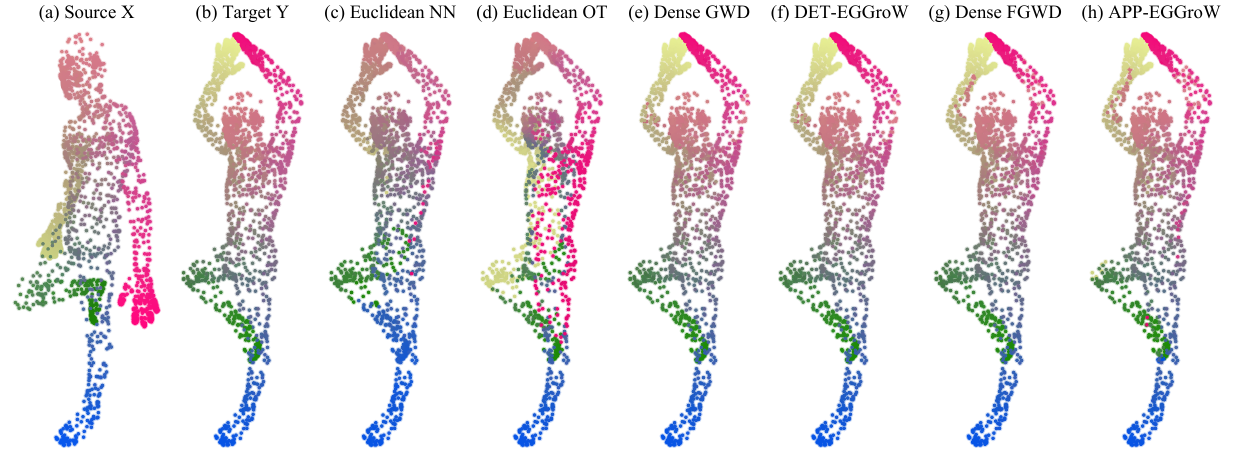}
\caption{FAUST pose correspondence for (a) source scan $004$ and (b) target scans $006$ (top) and $007$ (bottom). Colors are intrinsic source colors transferred to the target by each method (c)-(h).}
\label{fig:faust-gw-correspondence-006-007}
\end{figure}

\subsection{Template Detection}
\label{sec:template-detection-appendix}

Here, we provide specific details on the setup for template detection in Sec.~\ref{sec:exp-template}. We used meshes from the Thingi10k dataset \cite{Zhou2016Thingi10K} with the following IDs: \texttt{[977256, 104968, 73083, 718339, 38644, 44381, 134543, 65444, 39946, 985159, 263199, 931879, 42370, 69987, 71999, 112544, 98020, 53750, 1514903, 362964]}.

We center each source mesh at the midpoint of its bounding box and normalize its bounding-box diagonal to one. This normalization remains fixed across support sizes and is also used for the target. For the sampled-point experiments, we draw points uniformly with respect to source surface area and use nested prefixes of the same sample sequence to vary resolution. Each support contains $n$ points with uniform masses $a_i=b_i=1/n$, and all methods receive identical inputs at each size.

We construct the target by applying a smooth nonrigid deformation, followed by a fixed rotation and translation, to the sampled source points. A random permutation removes their original ordering while preserving known material correspondences for evaluation. For shortest-path comparisons, we construct an undirected graph by taking the union of the directed $8$-nearest-neighbor edges, weighted by Euclidean length. Disconnected components are joined using Euclidean minimum-spanning-tree bridges.


\subsection{Limitations and Future Work}
\label{sec:limitations-future-work}
While our experiments cover a selected range of poses, object geometries, and scenes, a broader evaluation would help establish robustness across subjects and acquisition conditions. Template detection also depends on the quality of the observed geometry and initialization, particularly under clutter and occlusion. Future work will explore more diverse datasets, stronger geometric features, and adaptive approximation strategies to improve robustness and scalability for larger, more complex scenes.

\end{document}